\def\BibTeX{{\rm B\kern-.05em{\sc i\kern-.025em b}\kern-.08em
    T\kern-.1667em\lower.7ex\hbox{E}\kern-.125emX}}
\begin{document}

\title{Extensions to Generalized Annotated Logic and an Equivalent Neural Architecture
}

\author{\IEEEauthorblockN{Paulo Shakarian}
\IEEEauthorblockA{\textit{Arizona State University}\\
Tempe, AZ USA \\
pshak02@asu.edu}
\and
\IEEEauthorblockN{Gerardo I. Simari}
\IEEEauthorblockA{\textit{Department of Computer Science and Engineering}\\
\textit{Universidad Nacional del Sur (UNS)}\\
\textit{Institute for Comp.\ Sci.\ and Eng.\ (UNS-CONICET)}\\
Bahia Blanca, Argentina \\
gis@cs.uns.edu.ar}
}

\newtheorem{dmodel}{Diffusion Model}[section]
\newtheorem{definition}{Definition}[section]
\newtheorem{example}{Example}[section]
\newtheorem{proposition}{Proposition}[section]
\newtheorem{theorem}{Theorem}[section]
\newtheorem{lemma}[theorem]{Lemma}
\newtheorem{fact}[theorem]{Observation}

\newtheorem{oprob}{Open Problem}[section]
\newtheorem{mex}{Military Example}[section]
\newtheorem{corollary}{Corollary}[section]

\newcommand{\from}[3]{{\bf [{\sc from #1 to #2:} {\small #3}]}}

\def\asetpi{\textsf{aset}_{\Pi(GK)}}
\def\xa{X_{\bV_1}}
\def\xc{X_{\bV_2 - \bV_1}}
\def\xb{X_{\{ v \}}}
\def\cmin{c_{(\min)}}
\def\minsetpi{\textsf{minset}_{\Pi(GK)}}
\def\maxsetpi{\textsf{maxset}_{\Pi(GK)}}
\def\upmaxpi{\textsf{UPMAX}_{\Pi(GK)}}
\def\lomaxpi{\textsf{LOMAX}_{\Pi(GK)}}
\def\minsetpio{\textsf{minset}_{\Pi^{1}(GK)}}
\def\maxsetpio{\textsf{maxset}_{\Pi^{1}(GK)}}
\def\upmaxpio{\textsf{UPMAX}_{\Pi^{1}(GK)}}
\def\lomaxpio{\textsf{LOMAX}_{\Pi^{1}(GK)}}

\def\pio{\Pi^{(1)}}
\def\dppi{DP_{\Pi(GK)}}
\def\dppio{DP_{\pio(GK)}}
\def\vdp{V_{DP}}
\def\edp{E_{DP}}
\def\lbldp{lbl_{DP}}

\def\nset{\textsf{nset}_{GK,pred}}
\def\nsetgoal{\textsf{nset}_{GK,goal}}

\def\asetpio{\textsf{aset}_{\pio(GK)}}

\def\covset{\textsf{covset}_{\pio(GK)}}

\def\md{\textsf{mindist}}
\def\fr{\textsf{fr}}

\def\relu{\mathit{relu}}
\def\incon{\mathit{incon}}
\def\percep{\mathit{percep}}

\def\calr{\mathcal{R}}
\def\calb{\mathcal{B}}
\def\calc{\mathcal{C}}
\def\cala{\mathcal{A}}
\def\calk{\mathcal{K}}
\def\cali{\mathcal{I}}
\def\calt{\mathcal{T}}
\def\calv{\mathcal{V}}

\def\call{\mathcal{L}}

\def\call{\mathcal{L}}
\def\calo{\mathcal{O}}
\def\cald{\mathcal{D}}
\def\cals{\mathcal{S}}
\def\cale{\mathcal{E}}
\def\calf{\mathcal{F}}
\def\calu{\mathcal{U}}
\def\feas{{\mathsf{feas}}}
\def\eqo{\textsf{EQ}_\calo}
\def\eqs{\textsf{EQ}_\cals}
\def\ordo{\textsf{ORD}_\calo}
\def\ords{\textsf{ORD}_\cals}
\def\GK{\textsf{GK}}
\def\FM{{\mathsf{FM}}}
\def\preans{{\mathsf{pre\_ans}}}
\def\answ{{\mathsf{ans}}}
\def\calh{{\mathcal{H}}}

\def\met{\textsf{MET}}

\def\pred{\mathit{pred}}
\def\npred{\mathit{not\_pred}}
\def\cpred{\mathit{cause\_pred}}

\def\rel{\mathit{rel}}
\def\red{\textsf{RED}}
\def\bl{\textsf{BLUE}}
\def\none{\textsf{none}}
\def\rsc{\textsf{RSC}}
\def\pd{\textsf{pd}}
\def\sp{\mathcal{S}}
\def\APT{{\textsf{APT}}}
\def\ISA{{\textsf{ISA}}}
\def\apt{\APT}
\def\calp{\mathcal{P}}
\def\st{\textit{ such that }}
\def\pst{P^{*}}
\def\pstp{P^{*}_{(Pr)}}
\def\pms{\textsf{PM}_\sp}
\def\pc{\textsf{PC}}
\def\mpc{\textsf{mPC}}
\def\setcov{\textsf{SET\_COVER}}
\def\ssa{\textsf{SIM\_SA}}
\def\msa{\textsf{STEADY\_SA}}
\def\NAP{{\textsf{NAP}}}
\def\NAPs{{\textsf{NAPs}}}
\def\avar{\textsf{AVar}}
\newcommand{\tib}[1]{{\textbf{\textit{#1}}}}
\def\T{{\mathbf{T}}}
\def\vp{{\textsf{VP}}}
\def\ep{{\textsf{EP}}}
\def\bG{\textbf{\textsf{G}}}
\def\bV{\textbf{\textsf{V}}}
\def\bE{\textbf{\textsf{E}}}

\def\val{\textit{val}}
\def\inc{\textit{inc}}
\def\incalg{\textit{inc}^{(\textit{alg})}}
\def\incopt{\textit{inc}^{(\textit{opt})}}
\def\incup{\textit{inc}^{(\textit{up})}}
\def\incoptup{\textit{inc}^{(\textit{opt,up})}}
\def\Ialg{I^{(\textit{alg})}}
\def\remaining{\textit{REM}}
\def\INC{\textit{INC}}
\def\PROG{\textit{PROG}}
\def\Ie{I^{(\epsilon)}}
\def\INCe{\textit{INC}^{(\epsilon)}}
\def\ince{\textit{inc}^{(\epsilon)}}
\def\PROGe{\textit{PROG}^{(\epsilon)}}
\def\spreade{\textit{spread}^{(\epsilon)}}
\def\Gspreade{\textit{GS}^{(\epsilon)}}
\def\cande{{\textit{cand}^{(\epsilon)}}}
\def\S{\textbf{S}}

\maketitle

\begin{abstract}
While deep neural networks have led to major advances in image recognition, language translation, data mining, and game playing, there are well-known limits to the paradigm such as lack of explainability, difficulty of incorporating prior knowledge, and modularity.  Neuro symbolic hybrid systems have recently emerged as a straightforward way to extend deep neural networks by incorporating ideas from symbolic reasoning such as computational logic.
In this paper, we propose a list desirable criteria for neuro symbolic systems and examine how some of the existing approaches address these criteria.  We then propose an extension to generalized annotated logic that allows for the creation of an equivalent neural architecture comprising an alternate neuro symbolic hybrid.  However, unlike previous approaches that rely on continuous optimization for the training process, our framework is designed as a binarized neural network that uses discrete optimization.
We provide proofs of correctness and discuss several of the challenges that must be overcome to realize this framework in an implemented system.
\end{abstract}

\begin{IEEEkeywords}
Logic programming, Neural networks, Machine learning
\end{IEEEkeywords}

\section{Introduction}

While deep neural networks have led to major advances in image recognition, language translation, data mining, and game playing, there are well-known limits to the paradigm such as lack of explainability, difficulty of incorporating prior knowledge, and modularity~\cite{marcus18}.  Neuro symbolic hybrid systems have recently emerged as a straightforward way to extend deep neural networks by incorporating ideas from symbolic reasoning such as computational logic~\cite{neurASP2020,thomasNsr21,nttIlp21,deepMinIlp2018,lnn2020,ltn22,fuzzy2022}.
In this paper, we first propose a list of desirable criteria for a neuro symbolic system and examine how some of the existing approaches each address these criteria.
We then propose an extension to generalized annotated logic~\cite{ks92} that allows for the creation of an equivalent neural architecture comprising an alternate neuro symbolic hybrid.
However, unlike previous approaches that rely on continuous optimization for the training process, our framework is designed as a binarized neural network~\cite{Courbariaux16} that uses discrete optimization.  We provide proofs of correctness and discuss several of the challenges that must be overcome to realize this framework in an implemented system.

The rest of this paper is organized as follows.
In Section~\ref{sec:cri} we propose a list of desirable criteria and briefly discuss which of the current approaches meet the criteria.  In Section~\ref{sec:gaps} we review generalized annotated logic along with our extensions, mainly related to the use of interpretations for literals (as opposed to atoms) and the use of a lower semi-lattice (instead of an upper semi-lattice).
This is followed by a section containing proofs related to our extensions to annotated logic (Section~\ref{sec:lowerRes}).
This allows us to then introduce our parametrized rule structure (Section~\ref{sec:struct}) and associated neural architecture (Section~\ref{sec:neuralEmbed}).
We discuss how this framework can identify inconsistencies in Section~\ref{sec:incon}, briefly discuss in Section~\ref{sec:discusion} how our framework meets the criteria,
and conclude with a discussion of challenges (Section~\ref{sec:chall}).

\section{Criteria to Consider in Neuro Symbolic Reasoning and Related Work}
\label{sec:cri}

Inspired on the critique of deep learning in~\cite{marcus18}, we have derived the following criteria for a neuro symbolic system.
\begin{enumerate}
\item Support for symbolic inference for arbitrary queries
\item Symbolic explanation of results amenable to both further analysis and human interpretation
\item Ability to integrate prior knowledge and/or constraints
\item Strong assurances of consistency
\item Ability to learn rule structure, including classical (i.e., non-fuzzy) rules
\item Scalability
\end{enumerate}

The framework of logic tensor networks (LTN)~\cite{ltn22,fuzzy2022} provides a neural symbolic approach where symbols have an associated vector representation that in turn allows, via the training process, for logical sentences (known a-priori) to be assigned weights where the intuition is related to a level of truth.  This follows from fuzzy logic, similar to prior approaches combining fuzzy logic and machine learning~\cite{getoor2017}.  The framework excels in its ability to conduct symbolic inference on queries and provide symbolic results in a scalable manner as well as the ability to incorporate prior knowledge.  However, the framework does not allow for the learning of logical rules and it treats consistency as a component of the loss function, hence not guaranteeing consistency.  We note that the lack of consistency assurances means that we do not have guarantees that prior knowledge is indeed integrated.  Further, the system does not provide an explanation of how it obtained a symbolic result.  In many ways, the capabilities of LTN's are similar to deep ontological networks~\cite{hohenecker2020ontology} (although the two differ significantly in how they handle training data).  However,  the explainability problem is more pronounced in ontological neural networks due to its use of embeddings.  In summary, LTNs and deep ontological networks meet criteria 1, 3, and 6---they allow for symbolic reasoning and incorporation of prior knowledge in a scalable manner but do not make consistency guarantees, learn rule structure, nor provide explanatory results.

The framework of logical neural networks (LNN)~\cite{lnn2020} overcomes some of the explainability issues of LTN and deep ontological networks.  However, like LTN, it still does not have hard consistency guarantees, as consistency is handled as a component of the loss function.  Further, it is not capable of learning rules without a-prior knowledge of their structure.  In short, LNNs meet criteria 1, 2, 3, and 6---they provide much of the same capabilities of LTN and deep ontological networks, but also provide an explainable result due to the 1-1 relationship between logical syntax and neural structure.  However, they cannot guarantee consistency or learn rule structure.

Differentiable inductive logic programming (ILP)~\cite{deepMinIlp2018,nttIlp21} is designed to learn logical rules using gradient descent.  This approach suffers from scalability issues as it involves generating a large number of ``rule templates'' and assigning them weights using gradient descent.  Further, this approach only learns weighted (fuzzy) rules as opposed to classical ones and also does not provide strong consistency guarantees.  In short, differentiable ILP excels at learning structure, though not in the classical sense (criterion 5) and provides a symbolic framework and explainability (criteria 1 and 2).  The work as presented does not incorporate prior knowledge or ensure consistency (criteria 3 and 4), but future work could possilby provide those extensions.  However, a key drawback of this approach is scalability (criterion 6) due to the requirement to generate large amounts of rule templates.

A key insight into this work is the combined use of annotated logic and binarized neural networks to avoid rule template generation.  The idea is that instead of generating a combinatorial number of rule templates for a given consequent, we can generate a constant number of rules and use binarized neural learning to prune the elements of the body.  Annotated logic, which has a semantics that associated atoms with elements of a lattice structure allows us to have a ``don't care'' element for a given atom in the body.  With binarized weight learning, we can then use discrete weights to assign the ``don't care'' element to weights associated with the atoms we desire to prune from the body.  We avoid the vanishing gradient problem that is associated with discrete weight learning by using mature software developed for binarized neural networks~\cite{Courbariaux16} that is the result of a line of research on the use of pseudo-gradients for binary weights during gradient descent~\cite{toms90,Magoulas1997,PlagianakosMV06}.

Neural answer set programming (NeurASP)~\cite{neurASP2020} does provide strong notions of consistency, but to date has only been used to provide a logical layer on top of a neural network to enforce logical constraints, essentially introducing a capability absent in the other approaches discussed previously.  However, this approach does not scale and does not support the learning of new rules.  In short, NeurASP satisfies criteria 1-4, but does not meet criteria 5-6 (rule structure learning and scalability).

In what follows, we review generalized annotated programs~\cite{ks92} with some key extensions (the use of a lower semi-lattice for annotations and a semantic structure that maps literals instead of atoms to annotations).  We posit that this provides the key to a framework that ensures that all criteria can be adhered to simultaneously.

\section{Generalized Annotated Programs}
\label{sec:gaps}

We now recapitulate the definition of Generalized Annotated Logic programs (from now on referred to as ``GAPs'', for short) from \cite{ks92}, but with limited syntax and certain modifications.  There are two reasons why it is advantageous to use GAPs: 
\begin{enumerate}
\item It is a framework that easily allows atomic propositions to be associated with values from a lattice structure, which generalizes other real-valued logic paradigms previously introduced in the neuro symbolic reasoning literature~\cite{lnn2020,fuzzy2022,ltn22,deepMinIlp2018}.
     
\item We can set the annotations based on a lattice structure that can support describing an atomic proposition not only as true, but as false or uncertain (i.e., no knowledge).
\end{enumerate}

\medskip
\noindent\textbf{Extension: Use of Lower Semi Lattice.} In \cite{ks92}, the authors assumed the existence of an upper semi-lattice, $\calt$ (not necessarily complete) with ordering $\sqsubseteq$.  However, in this work, we propose to instead utilize a \textit{lower} semi-lattice structure.  Therefore, we have a single element $\bot$ and multiple top elements $\top_0,\ldots\top_i\ldots\top_{max}$.  The notation $height(\calt)$ is the maximum number of elements in the lattice in a path between $\bot$ and a top element (including $\bot$ and the top element)\footnote{In general, we shall assume that the lattice consists of finite, discrete elements.}.  Note that we provide rigorous proofs of certain results from \cite{ks92} on a lower semi-lattice in Section~\ref{sec:lowerRes}.

The employment of a lower semi-lattice structure enables two desirable characteristics.  First, we desire to annotate atoms with intervals of reals in $[0,1]$ as done in previous work~\cite{lnn2020,mancalog13,apt11}.  Second, it allows for reasoning about such intervals whereby the amount of uncertainty (i.e., for interval $[l,u]$ the quantity $u-l$) decreases monotonically as an operator proceeds up the lattice structure.  
Therefore, we define the bottom element $\bot = [0,1]$ and a set of top elements $\{[x,x] \; | \; [x,x]\subseteq [0,1]\}$ (see note\footnote{N.B. that when using a semi-lattice of bounds, the notation ``$\sqsubseteq$'' loses its ``subset intuition'', as $[0,1] \sqsubseteq [1,1]$ in this case, for example.}).  Specifically, we set $\top_0 = [0,0]$ and $\top_{max}=[1,1]$.  An example of such a semi-lattice structure is shown in Figure~\ref{fig:lowerLattice}.

\begin{figure*}[!t]
    \begin{center}
        \includegraphics[width=.8\linewidth]{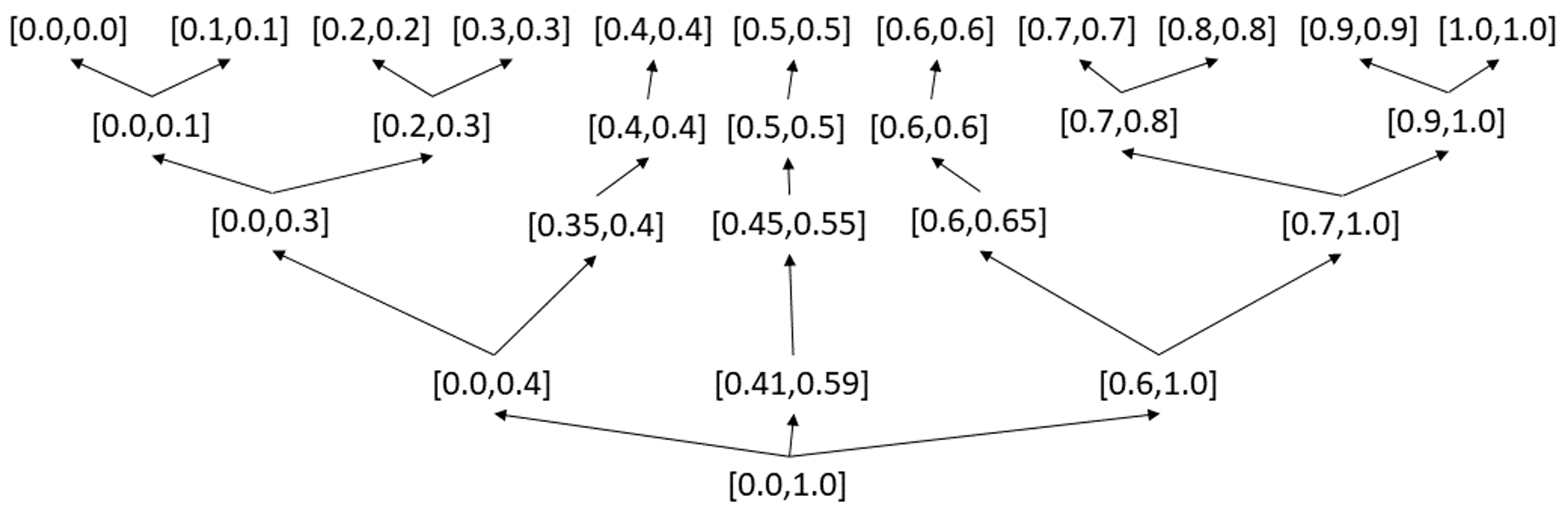}
    \end{center}
    \caption{\label{fig:lowerLattice}Example of a lower semi-lattice structure where the elements are intervals in $[0,1]$.}
\end{figure*}

\medskip
\noindent\textbf{Syntax of GAPs  (Review of prior work).}  We assume the existence of a set $\avar$ of variable symbols ranging over $\calt$ and a set $\calf$ of function symbols, each of which has an associated arity. We start by defining annotations.

\begin{definition}[Annotation]
\textsf{(i)} Any member of $\calt\,\cup \avar$ is an annotation.\\
\textsf{(ii)} If $f$ is an $n$-ary function symbol over $\calt$ and $t_1,\ldots,t_n$ are annotations, then $f(t_1,\ldots,t_n)$ is an annotation.
\end{definition}

One specific function we define is ``$\neg$'', which is used in semantics of \cite{ks92}.  For a given $[l,u]$, $\neg([l,u])=[1-u, 1-l]$.  Note that we also use the symbol  $\neg$ in our first-order language (following the formalism of \cite{ks92}).

We define a separate logical language whose constants are members of set $\calc$ and whose predicate symbols are specified by set $\calp$.
We also assume the existence of a set $\calv$ of variable symbols ranging over the constants, that no function symbols are present, and terms and atoms are defined in the usual way (cf.\ \cite{ll87}).  We shall assume that $\calc,\calp,\calv$ are discrete and finite.  In general, we shall use capital letters for variable symbols and lowercase letters for constants.  Similar to previous work~\cite{deepMinIlp2018,thomasNsr21}, we assume that all elements of $\calp$ have an arity of either~1 or~2---we use $\calp_{una}$ to denote the set of unary predicates and $\calp_{rel}$ the set of binary predicates.   We shall also denote a subsets of $\calp$ to include ``target predicates'' written $\calp_{tgt}$ that can consist of either binary or unary predicates ($\calp_{tgt\_rel},\calp_{tgt\_una}$) provided that they are not reserved words.  We shall use the symbol $\call$ to denote the set of all ground literals and $\cala$ for the set of all ground atoms.  We now define the syntactical structure of GAPs that will be used in this work.

\begin{definition}[Annotated atoms, negations, literals]
The core syntactic structures are defined as follows:
\begin{itemize}
\item{\textbf{Annotated atom.} If $\mathbf{a}$ is an atom and $\mathbf{\mu}$ is an annotation, then $\mathbf{a:\mu}$ is an \emph{annotated atom}.}
\item{\textbf{Annotated Negation.} If $\mathbf{a}$ is an atom and $\mathbf{\mu}$ is an annotation, then $\mathbf{\neg a:\mu}$ is an \emph{annotated negation}.}
\item{\textbf{Annotated Literal.} Collectively, atoms and negations are referred to as \emph{annotated literals}.}
\end{itemize}
\end{definition}

\begin{definition}[GAP Rule]
If $\ell_0:\mu_0, \ell_1:\mu_1,\ldots,\ell_m:\mu_m$ are annotated literals (such that for all $i,j \in 1,m$, $\ell_i\not\equiv \ell_j$), then
\begin{eqnarray*}
r\equiv \ell_0:\mu_0 & \leftarrow & \ell_1:\mu_1\,\wedge\,\ldots\wedge\, \ell_m:\mu_m
\end{eqnarray*}
is called a \emph{GAP rule}.  We will use the notation $head(r)$ and $body(r)$ to denote $\ell_0$ and $\{\ell_1,\ldots,\ell_m\}$ respectively. When $m=0$ ($body(r)=\emptyset$), the above GAP-rule is called a \emph{fact}.  A GAP-rule is \emph{ground} iff there are no occurrences of variables from either $\avar$ or $\calv$ in it.  For ground rule $r$ and ground literal $\ell$, $bodyAnno(\ell,r)= \mu$\textit{ such that }$\ell:\mu$\textit{ appears in the body} of $r$.  A generalized annotated program $\Pi$ is a finite set of GAP rules.
\end{definition}

\noindent\textbf{Semantics of GAPs (Extended in this work).}  The formal semantics of GAPs are defined as follows.  Note that we extend the notion of an interpretation to allow for a mapping of literals to annotations (as opposed to atoms).  However, we add a requirement on the annotation between each atom and negation that ensures equivalence to the semantic structure 
of~\cite{ks92}.  The intuition behind this extension is that we can more easily detect inconsistencies using the fixpoint operator, as we can just compare the annotations of each pair of literals (the atom and its negation).

\begin{definition}[Interpretation]
\label{def:interp}
An interpretation $I$ is any mapping from the set of all grounds literals to $\calt$ such that for literals $a, \neg a$, we have $I(a) = \neg(I(\neg a))$.  The set $\cali$ of all interpretations can be partially ordered via the ordering: $I_1\preceq I_2$ iff for all ground literals $a$, $I_1(\ell)\sqsubseteq I_2(\ell)$. $\cali$ forms a complete lattice under the $\preceq$ ordering.
\end{definition}

\noindent\textbf{Satisfaction Relationship (Review of prior work).}  We now present the concept of satisfaction:

\begin{definition}[Satisfaction]
An interpretation $I$ \emph{satisfies} a ground literal $\ell:\mu$, denoted $I\models \ell:\mu$, iff $\mu \sqsubseteq I(\ell)$. $I$ satisfies the ground GAP-rule
\begin{eqnarray*}
\ell_0: \mu_0 & \leftarrow & \ell_1:\mu_1\wedge\,\ldots\,\wedge\, \ell_m:\mu_m
\end{eqnarray*}
(denoted $I\models \ell_0:\mu_0\leftarrow \ell_1:\mu_1\,\,\wedge\,\ldots\,\wedge\,\ell_m: \mu_m$)  iff either

\begin{enumerate}
\item $I$ satisfies $\ell_0:\mu_0$ or
\item There exists an $1\leq i\leq m$ such that $I$ does not satisfy $\ell_i:\mu_i$.
\end{enumerate}

$I$ satisfies a non-ground literal or rule iff $I$ satisfies all ground instances of it.
\end{definition}

We say that an interpretation $I$ is a \emph{model} of program $\Pi$ if it satisfies all rules in $\Pi$.
Likewise, program $\Pi$ is \emph{consistent} if there exists some $I$ that is a model of $\Pi$.
We say $\Pi$ \emph{entails} $\ell:\mu$, denoted $\Pi \models_\mathit{ent} \ell:\mu$, iff for every interpretation $I$ s.t.\ $I\models \Pi$, we have that $I\models \ell:\mu$.
As shown by \cite{ks92}, we can associate a fixpoint operator with any GAP $\Pi$ that maps interpretations to interpretations.

\begin{definition}
Suppose $\Pi$ is any GAP and $I$ an interpretation.
The mapping $\T_{\Pi}$ that maps interpretations to interpretations is defined as
\[
\T_{\Pi}(I)(\ell_0) = \mathbf{sup}(annoSet_{\Pi,I}(\ell_0)),
\]
where $annoSet_{\Pi,I}(\ell_0)= \{I(\ell_0) \}\cup\{ \mu_0\: \; | \; \: \ell_0:\mu_0\leftarrow \ell_1:\mu_1\,\wedge\ldots\wedge\, \ell_m:\mu_m \textit{ is a ground instance}$ $\textit{of a rule in } \Pi,
\textit{ and for all } 1\leq  i\leq m, \textit{we have } I\models \ell_i:\mu_i\}$.
\end{definition}

The key result of \cite{ks92} (Theorem 2 in that paper, discussed in the next section) tells us that $\mathit{lfp}(\T_{\Pi})$ precisely captures the ground atomic logical consequences of $\Pi$.  We show this is also true (under the condition that $\Pi$ is consistent) even if the annotations are based on a lower lattice (see Theorem~\ref{thm:lgap_fp}).  In \cite{ks92}, the authors also define the \emph{iteration} of $\T_\Pi$ as follows:

\begin{itemize}
\item $\T_\Pi\uparrow 0$ is the interpretation that assigns $\bot$ to all ground literals.
\item $\T_\Pi\uparrow (i+1) = \T_\Pi(\T_\Pi\uparrow i)$.
\end{itemize}

For each ground $\ell \in \call$, the set $\Pi(\ell)$ is the subset of ground rules (to include facts) in $\Pi$ where $\ell$ is in the head.  We will use the notation $m_\ell$ to denote the number of rules in $\Pi(\ell)$.  
For a given ground rule, we will use the symbol $r_{\ell,i}$ to denote that it is the $i$-th rule with atom $\ell$ in the head.

\section{New Theoretical Results for Annotated Logic on a Lower Semi-lattice}
\label{sec:lowerRes}

The use of the lower semi-lattice structure for annotations in GAPs leads us to revisit some of the results of~\cite{ks92} that depend upon an upper semi-lattice structure. In this section, we shall review results from \cite{ks92} that apply to an upper semi-lattice (and shall refer to results specific to the upper semi-lattice as ``UGAPs'') and also prove analogous results for GAPs where the annotation uses a lower semi-lattice (``LGAPs'').  Further, note that in UGAPs we shall refer to interpretations defined only as mappings of atoms to annotations.  
This makes it relatively straightforward to have consistency; consider the following proposition.

\begin{proposition}
\label{prop:consist}
Any UGAP $\Pi$ consisting of UGAP-rules where the atom bodies and heads are atoms is consistent.
\end{proposition}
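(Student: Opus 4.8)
The plan is to prove consistency constructively, by exhibiting a single interpretation that is a model of every rule in $\Pi$. The crucial feature of the UGAP setting that I would exploit is that interpretations map \emph{atoms} to $\calt$ with no coupling constraint between an atom and its negation (unlike the extended LGAP interpretations of Definition~\ref{def:interp}, which impose $I(a)=\neg(I(\neg a))$). Since by hypothesis the heads and bodies contain only atoms, I am free to assign each ground atom whatever annotation I like, and in particular I can try to make the \emph{head} of every rule true simultaneously.

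Concretely, I would let $\top$ denote the greatest element of $\calt$ and consider the interpretation $I_\top$ that assigns $\top$ to every ground atom. Such a greatest element exists because the paper assumes $\calt$ is a finite, discrete upper semi-lattice: a finite join-semilattice has all finite joins, so the join over all of its (finitely many) elements exists and is the maximum $\top$. Then, for any ground instance $\ell_0:\mu_0 \leftarrow \ell_1:\mu_1 \wedge \ldots \wedge \ell_m:\mu_m$ of a rule of $\Pi$, we have $\mu_0 \sqsubseteq \top = I_\top(\ell_0)$, so $I_\top \models \ell_0:\mu_0$; by the first disjunct of the definition of satisfaction, $I_\top$ satisfies the rule regardless of the body. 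As this holds for every ground instance of every rule, $I_\top$ is a model of $\Pi$, and hence $\Pi$ is consistent.

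An alternative, and more robust, route that avoids assuming a top element is the fixpoint argument: since $\cali$ is a complete lattice under $\preceq$ and $\T_\Pi$ is monotone (both established in \cite{ks92}), the Knaster--Tarski theorem guarantees a least fixpoint $I^{*}=\mathit{lfp}(\T_\Pi)$. One then checks that $I^{*}$ is a model: for any ground rule whose body $I^{*}$ satisfies, the head annotation $\mu_0$ lies in $annoSet_{\Pi,I^{*}}(\ell_0)$, so $\mu_0 \sqsubseteq \mathbf{sup}(annoSet_{\Pi,I^{*}}(\ell_0)) = \T_\Pi(I^{*})(\ell_0) = I^{*}(\ell_0)$, whence $I^{*}\models\ell_0:\mu_0$. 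Thus $I^{*}$ satisfies every rule and $\Pi$ is consistent. The only technical point to pin down in either route is minor: for the direct argument it is the existence of $\top$ (which follows from finiteness of $\calt$), and for the fixpoint argument it is monotonicity of $\T_\Pi$ together with completeness of $\cali$. The genuine conceptual point---and the reason this proposition fails once negations are reintroduced---is that the atoms-only restriction removes any link between $I(a)$ and $I(\neg a)$, so no over-assignment of annotations can ever produce a conflict.
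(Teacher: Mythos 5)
Your first argument is exactly the paper's proof: the paper also takes the interpretation assigning $\top$ to every ground atom and observes that it satisfies every rule via its head; you merely add the (correct and worthwhile) justification that $\top$ exists because $\calt$ is assumed finite. Your alternative fixpoint route is a valid bonus but not needed; the main line of reasoning is essentially identical to the paper's.
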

\begin{proof}
Consider the interpretation $I$ such that $\forall a \in \cala$, $I(a) = \top$.  By the semantics of GAPs, this will satisfy all rules.
\end{proof}

To provide a specific example of an LGAP that is not consistent, consider the following two rules (here $a$ is a ground literal):
\begin{eqnarray*}
a : [0,0] \leftarrow \\
a : [1,1] \leftarrow
\end{eqnarray*}
So, we consider the key result of \cite{ks92} below.

\begin{theorem}{(Theorem 2 in \cite{ks92})}
For UGAP $\Pi$, $\T_{\Pi}$ is monotonic and has a least fixpoint $\mathit{lfp}(\T_{\Pi})$.
Moreover, for this case $\Pi$ entails $a:\mu$ iff $\mu\sqsubseteq \mathit{lfp}(\T_{\Pi})(a)$.
\end{theorem}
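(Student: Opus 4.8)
The plan is to follow the classical van~Emden--Kowalski fixpoint argument adapted to the annotation semi-lattice. The statement splits into a fixpoint-existence claim and an entailment characterization, and I would treat them in that order, since the second relies on identifying $\mathit{lfp}(\T_\Pi)$ with the \emph{least} model of $\Pi$.

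First I would prove that $\T_\Pi$ is monotonic with respect to $\preceq$. Suppose $I_1\preceq I_2$ and fix a ground atom $\ell_0$. The set $annoSet_{\Pi,I_1}(\ell_0)$ consists of $I_1(\ell_0)$ together with the head annotations $\mu_0$ of those ground rules whose body is satisfied by $I_1$. Since satisfaction of a body literal $\ell_i:\mu_i$ means $\mu_i\sqsubseteq I_1(\ell_i)$ and $I_1(\ell_i)\sqsubseteq I_2(\ell_i)$, every body satisfied by $I_1$ is also satisfied by $I_2$; hence the rule-head contributions for $I_1$ form a subset of those for $I_2$, while also $I_1(\ell_0)\sqsubseteq I_2(\ell_0)$. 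Because $\mathbf{sup}$ is order-preserving under both set inclusion and the $\sqsubseteq$ ordering on elements, $\T_\Pi(I_1)(\ell_0)\sqsubseteq\T_\Pi(I_2)(\ell_0)$, giving $\T_\Pi(I_1)\preceq\T_\Pi(I_2)$. Existence of the least fixpoint then follows from monotonicity: because $\calc,\calp,\calv$ are finite and the logical language is function-free, the Herbrand base is finite, so, starting from $\T_\Pi\uparrow 0$ (which assigns $\bot$ everywhere and is therefore the least interpretation), the ascending iteration $\T_\Pi\uparrow 0\preceq\T_\Pi\uparrow 1\preceq\cdots$ stabilizes after finitely many steps at a fixpoint, which monotonicity forces to be the least one; equivalently, viewing the interpretation space as a complete lattice (cf.\ Definition~\ref{def:interp}), Knaster--Tarski applies directly.

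For the entailment characterization, the key lemma is that an interpretation $I$ is a model of $\Pi$ if and only if it is a pre-fixpoint, i.e.\ $\T_\Pi(I)\preceq I$. Indeed, $\T_\Pi(I)(\ell_0)\sqsubseteq I(\ell_0)$ says precisely that $I(\ell_0)$ is an upper bound for every head annotation $\mu_0$ of a rule whose body $I$ satisfies, which is exactly the rule-satisfaction condition from the definition of satisfaction (the trivial term $I(\ell_0)$ inside $annoSet$ being harmless, as $I(\ell_0)\sqsubseteq I(\ell_0)$). Knaster--Tarski identifies the least fixpoint with the least pre-fixpoint, so $\mathit{lfp}(\T_\Pi)$ is the least model of $\Pi$, and Proposition~\ref{prop:consist} guarantees that a model exists in the UGAP setting. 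Both directions then follow: if $\mu\sqsubseteq\mathit{lfp}(\T_\Pi)(a)$ then for every model $I$ we have $\mu\sqsubseteq\mathit{lfp}(\T_\Pi)(a)\sqsubseteq I(a)$, so $I\models a:\mu$ and hence $\Pi\models_{\mathit{ent}} a:\mu$; conversely, since $\mathit{lfp}(\T_\Pi)$ is itself a model, entailment of $a:\mu$ forces $\mu\sqsubseteq\mathit{lfp}(\T_\Pi)(a)$.

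I expect the main obstacle to be the pre-fixpoint/model correspondence together with the subtlety that $\calt$ is only an upper semi-lattice, not necessarily complete, so one must justify that all invoked suprema exist. The cleanest route is to lean on the finiteness assumption: every $annoSet_{\Pi,I}(\ell_0)$ is then finite, so each $\mathbf{sup}$ is a finite join that exists in any join-semilattice, and the pointwise suprema witnessing the least-fixpoint construction likewise reduce to finite joins. This lets me avoid assuming completeness of $\calt$ and instead obtain termination of the iteration directly, which is the technically delicate point to get right.
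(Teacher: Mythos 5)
Your argument is correct, but note that the paper does not actually prove this statement: it is quoted verbatim as Theorem~2 of \cite{ks92} as a review of prior work, and the only proofs the paper supplies are for the lower semi-lattice analogues (Theorem~\ref{thm:lgap_fp} and its neighbours), which are argued only by brief informal sketches. Your proof is the standard van~Emden--Kowalski/Knaster--Tarski route, which is essentially how \cite{ks92} establishes the result, and each step checks out: the monotonicity argument correctly accounts for the fact that $annoSet_{\Pi,I_1}(\ell_0)$ and $annoSet_{\Pi,I_2}(\ell_0)$ differ not only in the rule-head contributions but also in the seed elements $I_1(\ell_0)$ versus $I_2(\ell_0)$; the model/pre-fixpoint correspondence is exactly the satisfaction condition; and you are right to flag that $\calt$ being a possibly incomplete upper semi-lattice is the delicate point, resolved here by the paper's standing finiteness assumption so that every invoked supremum is a finite join. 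Two minor remarks: the complete-lattice claim you cite from Definition~\ref{def:interp} concerns the paper's extended, literal-based interpretations, whereas UGAP interpretations are atom-based (the pointwise argument is unaffected); and the appeal to Proposition~\ref{prop:consist} is unnecessary, since the existence of a model already follows from $\mathit{lfp}(\T_\Pi)$ being a pre-fixpoint and hence a model.
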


However, under the condition that $\Pi$ is consistent, we can show a similar result.

\begin{theorem}
\label{thm:lgap_fp}
If LGAP $\Pi$ is consistent, then:
\begin{enumerate}
\item\label{lowerLatticeThmPt1} $\T_{\Pi}$ is monotonic,
\item\label{lowerLatticeThmPt2} $\T_{\Pi}$ has a least fixpoint $\mathit{lfp}(\T_{\Pi})$, and
\item\label{lowerLatticeThmPt3} $\Pi$ entails $a:\mu$ iff $\mu\leq \mathit{lfp}(\T_{\Pi})(a)$.
\end{enumerate}
\end{theorem}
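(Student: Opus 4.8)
The plan is to follow the structure of the upper semi-lattice argument recorded in the UGAP theorem above, but to confront the one feature that genuinely distinguishes the lower semi-lattice: suprema need not exist, so $\T_\Pi$ is only a partial operator in general, and it is precisely consistency that restores well-definedness. The first thing I would record is a lattice-theoretic fact about the interval annotations: since $\sqsubseteq$ orders intervals by reverse inclusion and the supremum of $\{[l_j,u_j]\}_j$ is $[\max_j l_j, \min_j u_j]$ whenever that is a legal interval, a finite set of annotations possesses a least upper bound if and only if it possesses some upper bound at all. In other words, the lattice is bounded-complete. This is the replacement for the ``suprema always exist'' property that the UGAP proof relies on, and it is the hinge on which the consistency hypothesis turns.

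Consistency hands us at least one model, so I would show, for an arbitrary model $M$ of $\Pi$ and by induction on the iteration, that every iterate $\T_\Pi\uparrow i$ is well-defined and satisfies $\T_\Pi\uparrow i \preceq M$. The base case is immediate since $\T_\Pi\uparrow 0$ assigns $\bot$ everywhere. For the inductive step, fix a literal $\ell_0$ and examine $annoSet_{\Pi,\T_\Pi\uparrow i}(\ell_0)$: the element $(\T_\Pi\uparrow i)(\ell_0)$ is $\sqsubseteq M(\ell_0)$ by the inductive hypothesis, and for every rule that fires under $\T_\Pi\uparrow i$, upward-closure of satisfaction together with $\T_\Pi\uparrow i \preceq M$ forces the rule to fire under $M$ as well, so $M$ being a model gives $\mu_0 \sqsubseteq M(\ell_0)$. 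Hence $M(\ell_0)$ is an upper bound of the whole set, the supremum exists by bounded-completeness, and $\T_\Pi\uparrow(i+1)(\ell_0)\sqsubseteq M(\ell_0)$. The same bookkeeping yields monotonicity (Part~\ref{lowerLatticeThmPt1}): if $I_1\preceq I_2$ then each rule firing under $I_1$ also fires under $I_2$ and $I_1(\ell_0)\sqsubseteq I_2(\ell_0)$, so the annotation set for $I_1$ is dominated termwise by that for $I_2$ and the suprema compare accordingly wherever both are defined.

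For the least fixpoint (Part~\ref{lowerLatticeThmPt2}), monotonicity makes $\{\T_\Pi\uparrow i\}_i$ an increasing chain, and the previous step bounds it above by every model; in the finite complete lattice $\cali$ the chain therefore stabilizes at some $I^{*}$, which is a fixpoint because it is stationary. I would then identify fixpoints with models: unwinding the definition of $\sup$, the condition $\T_\Pi(I)\preceq I$ holds exactly when every firing rule contributes an annotation $\sqsubseteq I(\ell_0)$, i.e.\ exactly when $I$ is a model. Thus $I^{*}$ is a model, and since each iterate lies below every model, $I^{*}$ is the least model; being simultaneously a fixpoint and below every model (hence below every fixpoint), it is $\mathit{lfp}(\T_\Pi)$.

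Part~\ref{lowerLatticeThmPt3} then falls out of the identification $\mathit{lfp}(\T_\Pi)=I^{*}=$ least model: $\Pi$ entails $a:\mu$ iff every model satisfies $a:\mu$, iff the least model does, iff $\mu\sqsubseteq I^{*}(a)=\mathit{lfp}(\T_\Pi)(a)$. The main obstacle, and the place where this proof departs from the UGAP case, is establishing well-definedness of the operator: in the lower semi-lattice a set of annotations can fail to have any upper bound, so $\T_\Pi$ can be genuinely undefined, and I expect the delicate point to be arguing that the model bound together with bounded-completeness keeps every supremum encountered along the iteration legal. A secondary detail I would verify is that the iterates honor the interpretation constraint $I(a)=\neg(I(\neg a))$; this holds at $\T_\Pi\uparrow 0$ (since $\neg(\bot)=\bot$) and is compatible with meets, but one should confirm it is maintained by $\T_\Pi$ or else simply carry out the fixpoint construction within the sublattice of interpretations below $M$.
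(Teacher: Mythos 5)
Your proposal is correct, and it is considerably more detailed than the argument the paper actually gives. The paper's own proof of parts~\ref{lowerLatticeThmPt1} and~\ref{lowerLatticeThmPt2} simply asserts that monotonicity is ``trivial'' once interpretations map literals rather than atoms (and even claims this holds for inconsistent $\Pi$), and its proof of part~\ref{lowerLatticeThmPt3} is a two-sentence contradiction argument appealing to the idea that any derivation of $a:\mu$ ``would trivially be reflected'' in iterating $\T_\Pi$. You instead carry out the standard Kifer--Subrahmanian fixpoint construction in full: the bounded-completeness observation (in a finite lower semi-lattice, any set with an upper bound has a least one, since the meet of all upper bounds is itself an upper bound), the induction showing $\T_\Pi\uparrow i\preceq M$ for every model $M$ so that the suprema in $annoSet_{\Pi,\T_\Pi\uparrow i}(\ell_0)$ are always legal, the identification of pre-fixpoints with models, and the stabilization of the chain in the finite lattice $\cali$. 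This buys something real: your argument isolates exactly where the consistency hypothesis is used --- namely to guarantee that $\T_\Pi$ is even well-defined along the iteration, since suprema can fail to exist in a lower semi-lattice --- which is the genuine mathematical content distinguishing the LGAP case from the UGAP case and which the paper's proof never engages with (indeed, its claim that monotonicity holds ``even in the case where $\Pi$ is inconsistent'' sits uneasily with Theorem~\ref{them:incon}, whose whole point is that $annoSet$ can lack an upper bound). Your closing caveat about whether $\T_\Pi$ preserves the interpretation constraint $I(a)=\neg(I(\neg a))$ of Definition~\ref{def:interp} is also a legitimate issue the paper leaves unaddressed, since the rules for $a$ and for $\neg a$ update their annotations independently; flagging it (rather than silently assuming it) is the right call, and resolving it would be needed for a fully airtight proof under the paper's own definitions.
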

\begin{proof}
(\ref{lowerLatticeThmPt1} and \ref{lowerLatticeThmPt2}) By creating an interpretation that maps literals to annotations, instead of atoms, the monotonicity of $\T_{\Pi}$ is trivial even in the case where $\Pi$ in inconsistent and has a least fixed point.

\noindent
(\ref{lowerLatticeThmPt3}) Suppose, BWOC that $\Pi$ entails $a:\mu$ and $\mu> \mathit{lfp}(\T_{\Pi})(a)$.  However, this would imply there is a series of logical constructs that allow us to derive $a:\mu$, and this would trivially be reflected in the iterative applications of the $\T$ operator.  Going the other way, BWOC if $\mu\leq \mathit{lfp}(\T_{\Pi})(a)$ but $\Pi$ does not entail $a:\mu$ would imply that there is no application of the constructs in $\Pi$ that lead to the deductive conclusion of $a:\mu$; however this is again contradicted by the fact that $\T$ directly leverages the elements of $\Pi$.
\end{proof}

We can also show that for both LGAPs and UGAPs, we can bound the number of applications of $\T$ until convergence.

\begin{theorem}
\label{thm:appsOfT}
If (LGAP or UGAP) $\Pi$ is consistent, then $\mathit{lfp}(\T_{\Pi}) \equiv \T_\Pi\uparrow x$ where $x = height(\calt)*|\call|$.
\end{theorem}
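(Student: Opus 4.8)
The plan is to exploit the monotonicity of $\T_\Pi$ (Theorem~\ref{thm:lgap_fp}, part~\ref{lowerLatticeThmPt1}, and the cited Theorem~2 of \cite{ks92} for the UGAP case) to view the iteration $\{\T_\Pi\uparrow i\}_{i\geq 0}$ as a non-decreasing chain in the interpretation lattice $\cali$, and then to bound the number of steps at which the chain can \emph{strictly} increase. Since $\T_\Pi\uparrow 0$ assigns $\bot$ to every ground literal, it is the least element of $\cali$, and a routine induction using monotonicity gives $\T_\Pi\uparrow i \preceq \T_\Pi\uparrow(i+1)$ for all $i$. The same induction shows $\T_\Pi\uparrow i$ lies below every fixpoint of $\T_\Pi$, so as soon as the chain stabilizes at some step $k$ (i.e.\ $\T_\Pi\uparrow k = \T_\Pi\uparrow(k+1)$) that stable value is a fixpoint lying below all others, hence equals $\mathit{lfp}(\T_\Pi)$. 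It therefore suffices to prove the chain stabilizes by step $x = height(\calt)\cdot|\call|$.

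To count the strictly-increasing steps I would introduce a rank on lattice elements: for $t\in\calt$, let $\mathit{level}(t)$ be the number of edges in a longest $\sqsubseteq$-chain from $\bot$ up to $t$. Because $\calt$ is finite and discrete and every element sits below some top element, we have $\mathit{level}(\bot)=0$ and $\mathit{level}(t)\leq height(\calt)-1$ for all $t$, directly from the definition of $height(\calt)$ as a longest root-to-top path. The decisive property is that $t_1\sqsubset t_2$ implies $\mathit{level}(t_1)<\mathit{level}(t_2)$, since a maximal chain witnessing $\mathit{level}(t_1)$ extends to one ending at $t_2$. I would then define a potential $\Phi(I)=\sum_{\ell\in\call}\mathit{level}(I(\ell))$, which equals $0$ at $\T_\Pi\uparrow 0$ and is bounded above by $(height(\calt)-1)\cdot|\call|$.

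The counting step is then immediate. Whenever $\T_\Pi\uparrow i \neq \T_\Pi\uparrow(i+1)$, pointwise $\preceq$ together with inequality forces $(\T_\Pi\uparrow i)(\ell)\sqsubset(\T_\Pi\uparrow(i+1))(\ell)$ for at least one literal $\ell$, while no literal's annotation decreases; by the rank property $\Phi$ thus strictly increases by at least $1$. Since $\Phi$ is integer-valued, starts at $0$, and never exceeds $(height(\calt)-1)\cdot|\call|$, there can be at most $(height(\calt)-1)\cdot|\call|$ strictly-increasing steps. Hence the chain stabilizes by step $(height(\calt)-1)\cdot|\call| \leq height(\calt)\cdot|\call| = x$, giving $\T_\Pi\uparrow x = \mathit{lfp}(\T_\Pi)$; the argument is identical for UGAPs. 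Note that the constraint $I(a)=\neg(I(\neg a))$ does not interfere, as $\Phi$ ranges over all ground literals and the chain is pointwise non-decreasing, so a dual literal moving only adds to $\Phi$.

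The part requiring the most care, and the one place the consistency hypothesis is genuinely spent in the LGAP case, is ensuring the iteration is even well-defined: in a \emph{lower} semi-lattice an arbitrary $annoSet$ need not possess a supremum (two incomparable top elements have no upper bound), so $\T_\Pi$ could fail to map into $\cali$. A model of a consistent $\Pi$ supplies an upper bound for every $annoSet$ that arises during the iteration, forcing the needed $\mathbf{sup}$'s to exist; this is inherited through Theorem~\ref{thm:lgap_fp} but is worth restating here. The remaining technical obligation is to verify the rank property $t_1\sqsubset t_2\Rightarrow\mathit{level}(t_1)<\mathit{level}(t_2)$ and the bound $\mathit{level}(t)\leq height(\calt)-1$ rigorously from finiteness and discreteness of $\calt$, since these are exactly what convert ``the chain increases somewhere'' into ``the integer potential increases,'' yielding the stated iteration bound.
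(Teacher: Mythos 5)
Your proposal is correct and follows essentially the same route as the paper's proof: a counting argument showing that each strict increase in the iteration raises some literal's annotation in the lattice, so the number of non-stabilized steps is bounded by $height(\calt)\cdot|\call|$. Your version is more rigorous than the paper's terse sketch (the explicit rank/potential function, the off-by-one with $height(\calt)-1$, and the remark about suprema existing in a lower semi-lattice), but the underlying idea is identical.
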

\begin{proof}
We know, by the definition of $\T$, for any $i \leq x$, that for all $a \in \cala$, $\T_\Pi\uparrow i(a)\sqsubseteq  \T_\Pi\uparrow x(a)$.  Hence, we just need to consider the case where $i > x$ and $\mathit{lfp}(\T_{\Pi}) \equiv \T_\Pi\uparrow i$ and $\mathit{lfp}(\T_{\Pi}) \not\equiv \T_\Pi\uparrow x$.  However, at each iteration the annotation of at least one atom must change.  The bound on the number of changes in annotation is $height(\calt)$ (as the annotations must stay the same or increase monotonically, as $\Pi$ is consistent by the statement).  Hence, we have a contradiction.
\end{proof}

We can also leverage the $\T$ operator to identify inconsistencies.

\begin{theorem}
\label{them:incon}
(LGAP or UGAP) $\Pi$ is inconsistent if and only if for value $i$, and ground atom $a$, if there exist $\mu, \mu' \in annoSet_{\Pi,\T_\Pi \uparrow i}(a)$ where $\mu \not\sqsubseteq \mu'$ and $\mu' \not\sqsubseteq \mu$.
\end{theorem}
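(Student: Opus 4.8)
The plan is to reduce everything to one structural fact about the annotation lattice and then run the two directions through their contrapositives. The preliminary fact I would isolate first is that the lower semi-lattice of Figure~\ref{fig:lowerLattice} is \emph{tree-like}: the set of elements $\sqsubseteq$-below any fixed element is a chain. From this two consequences follow that drive the whole proof. If $\mu,\mu'$ have a common upper bound $z$, then both lie in the downset of $z$, which is a chain, so $\mu,\mu'$ are comparable; contrapositively, an incomparable pair has \emph{no} common upper bound. Conversely, any $\sqsubseteq$-bounded set sits inside such a chain, so (being finite) it has a maximum, i.e.\ a supremum. I would note explicitly that this clean equivalence between ``incomparable pair'' and ``missing supremum'' is what makes the condition stated in the theorem the right one; in a general lower semi-lattice the correct phrasing would instead be ``some $annoSet$ lacks a supremum,'' and the two coincide precisely when downsets are chains.

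For the direction ``incomparable pair $\Rightarrow$ inconsistent'' I would prove the contrapositive: assume $\Pi$ is consistent and fix a model $M$. First I would show $M$ is a pre-fixpoint, $\T_\Pi(M)\preceq M$: whenever a ground rule's body is satisfied by $M$, the satisfaction semantics force $\mu_0\sqsubseteq M(\ell_0)$, so $M(\ell_0)$ is an upper bound of $annoSet_{\Pi,M}(\ell_0)$ and hence dominates its supremum. Then, using monotonicity of $\T_\Pi$ (as observed in the proof of Theorem~\ref{thm:lgap_fp}, this holds whenever the relevant suprema exist) and induction from $\T_\Pi\uparrow 0=\bot\preceq M$, I get $\T_\Pi\uparrow i\preceq M$ for all $i$. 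Consequently every element of $annoSet_{\Pi,\T_\Pi\uparrow i}(a)$ lies $\sqsubseteq M(a)$ — the carried-over value $I(a)$ by the iterate bound, and each rule-supplied $\mu$ because a body satisfied by $\T_\Pi\uparrow i$ is a fortiori satisfied by $M$. Thus $M(a)$ is a common upper bound of the whole set, so by the preliminary fact no two of its members are incomparable, which is exactly the negation of the right-hand condition.

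For the converse ``inconsistent $\Rightarrow$ incomparable pair'' I would trace the iteration $\T_\Pi\uparrow 0,\T_\Pi\uparrow 1,\ldots$ and observe a dichotomy. Either every $annoSet_{\Pi,\T_\Pi\uparrow i}(a)$ encountered is pairwise comparable, hence a chain with an existing supremum; then $\T_\Pi$ is well defined at every step, monotone, and, the lattice being of finite height, the increasing sequence stabilizes at a fixpoint $M$ after finitely many steps (by the same finite-height argument underlying Theorem~\ref{thm:appsOfT}), and a fixpoint is immediately a model by the satisfaction semantics, contradicting inconsistency. Or else, at the first step where some supremum fails to exist, the corresponding $annoSet$ must contain an incomparable pair, which is the required witness. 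Hence inconsistency forces the incomparable pair.

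The hard part will be reconciling the literal-level operator with the interpretation constraint $I(a)=\neg(I(\neg a))$. As written, $\T_\Pi$ takes the supremum over each literal's own rules independently and need not return a map obeying this constraint, so a naive fixpoint need not be a genuine interpretation, and a conflict arising purely through the atom/negation pair (say $a:[0.6,1]$ from one rule and $\neg a:[0.6,1]$ from another) never appears in the $annoSet$ of $a$ alone. I would close this gap by folding the complementary annotations into the set compared for $a$, i.e.\ adjoining $\{\neg\mu' : \neg a:\mu' \text{ fires}\}$: since $\neg[l,u]=[1-u,1-l]$ is an order-preserving involution on $\calt$ ($\neg$ respects $\sqsubseteq$ and $\neg\neg=\mathrm{id}$), this makes $\T_\Pi$ preserve the constraint and makes the incomparability test over $a$ detect both rule-on-rule and atom-versus-negation conflicts, with $M(a)$ still bounding the enlarged set so both directions go through verbatim. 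A secondary point needing separate care is the UGAP clause: there annotations live in an upper semi-lattice, where incomparable elements always admit a join and the preliminary fact fails, so that half must be argued through Proposition~\ref{prop:consist} together with the negation constraint rather than through a missing supremum.
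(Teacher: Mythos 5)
Your proof is correct in outline and shares the paper's skeleton---both directions are run through their contrapositives, and the backward direction (inconsistent $\Rightarrow$ incomparable pair) is argued exactly as the paper does: if no incomparable pair ever appears, every supremum exists, the iteration stabilizes by finite height, and the resulting fixpoint is a model. Where you genuinely diverge is the forward direction. The paper's proof of Claim~1 asserts that $\T_\Pi(\T_\Pi\uparrow i)$ ``must be an interpretation'' and then cannot be one because no element lies above both $\mu$ and $\mu'$; it never says why consistency of $\Pi$ forces the iterates to remain well defined. Your route---fix a model $M$, show it is a pre-fixpoint, induct from $\T_\Pi\uparrow 0\preceq M$ to get $\T_\Pi\uparrow i\preceq M$, and conclude $M(a)$ is a common upper bound of every $annoSet_{\Pi,\T_\Pi\uparrow i}(a)$---supplies that missing justification, and it also isolates the structural fact both arguments silently depend on: in the tree-like lower semi-lattice of Figure~\ref{fig:lowerLattice}, downsets are chains, so an incomparable pair has no common upper bound. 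That lemma is nowhere stated in the paper but is load-bearing; making it explicit is a real improvement, since the equivalence ``incomparable pair $\Leftrightarrow$ missing supremum'' fails in a general lower semi-lattice.

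The two difficulties you flag at the end are not defects of your proof but of the theorem as stated, and the paper's proof does not engage with either. The atom/negation issue is real: the program consisting of the facts $a:[1,1]\leftarrow$ and $\neg a:[1,1]\leftarrow$ is inconsistent (no interpretation with $I(a)=\neg(I(\neg a))$ satisfies both), yet neither $annoSet(a)$ nor $annoSet(\neg a)$ ever contains an incomparable pair, so the ``only if'' direction fails without a repair such as the one you propose (folding the $\neg$-images of the negation's annotations into the set compared for $a$); Section~\ref{sec:incon}'s $\incon$ rules are in effect the paper's own acknowledgment that this cross-literal conflict needs separate machinery. Likewise, for UGAPs incomparable annotations always admit a join, so an incomparable pair in some $annoSet$ does not obstruct the supremum and, by Proposition~\ref{prop:consist}, does not yield inconsistency---the ``if'' direction of the biconditional is simply false there, and your instinct that this clause cannot be argued via a missing supremum is right. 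In short: same architecture as the paper, a more rigorous forward direction, and two correctly identified gaps that the paper's proof passes over.
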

\begin{proof}
\noindent Claim 1: If there exist $i,a$ such that the statement holds, then $\Pi$ is inconsistent.  Suppose, BWOC, that such an $i,a$ pair exist and $\Pi$ is consistent.  We know, by the definition of $\T$ that $\T(\T \uparrow i)$ must be an interpretation.  However, as there is no element above both $\mu,\mu'$, $\T$ that $\T(\T \uparrow i)$ cannot be a valid interpretation.

\noindent Claim 2: If $\Pi$ is inconsistent, then there exist $i,a$ such that the statement holds.  Suppose, BWOC, $\Pi$ is inconsistent and there does not exist such an $i,a$ pair.  Then, this implies that for all $a \in \cala$ that there exists some $i'$ where $\T(\T \uparrow i') = \T \uparrow i'$ which means for any $i'' >i'$ we have $\T \uparrow i' = \T \uparrow i''$.  Therefore, by the definition of satisfaction, $\T \uparrow i'$ must satisfy $\Pi$, which is a contradiction.
\end{proof}

\section{Parametrized GAP Rules}
\label{sec:struct}

In this section, we present a rule structure and annotation function (that later is also used as an activation function in the associated neural architecture) that can allow for us to learn GAP rules from data by using a process like gradient descent.  One of the key intuitions behind the use of a lower semi-lattice for annotations is that we can easily separate the concepts of negation and ``no information.''  Hence, in the classical case, the lattice structure would consist of three elements:
a lower ``uncertain'' element and two upper elements, one for {\em false} and one for {\em true} (see Figure~\ref{fig:lowerLattice2}).  
Through the use of a semantic structure that assigns literals to annotations, we can restrict our rules to activation functions that only modify the lower bound (i.e., if we wish to adjust the upper bound on a literal, we can instead have a rule that adjusts the lower bound on its negation).  
Also note the use of the interval of $[-1,1]$ as opposed to $[0,1]$---this is due to the common values used for binarized neural networks~\cite{Courbariaux16} that we will use to prune unneeded atoms from the body.

\begin{figure}[!t]
    \begin{center}
        \includegraphics[width=0.3\linewidth]{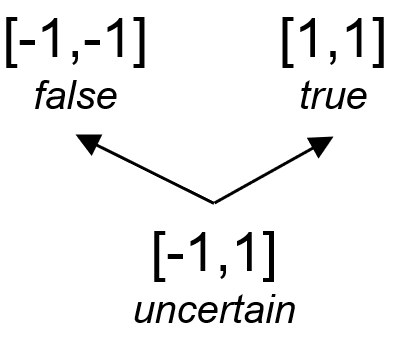}
    \end{center}
    \caption{\label{fig:lowerLattice2}Lower semi-lattice structure for the classical logic case.}
\end{figure}

\medskip
\noindent\textbf{Rule Structure.}  In this paper, we shall consider the propositional case (this is the case where all predicates are unary and we have a single constant).  Hence, there is no grounding (see Section~\ref{sec:chall} for discussions on grounding).  So, for a given literal $a$ in the rule head and for a potential set of literals $\call'$ in the body, we have the following rule $r$:
\begin{eqnarray}
\label{stdRule} r \equiv a:[f_{a,\theta_a^i}^{(i)}(X_{a}^{(i)}),1] \leftarrow \bigwedge_{\ell_j \in \call'}\ell_j : [x_j,x_j'].
\end{eqnarray}
In rule~\ref{stdRule}, we have rule $r$ that has literal $a$ in the head that is assigned a lower bound on the lattice element based on function $f_a^{(i)}$, which utilizes learned parameters $\theta_a^i$.  
Several observations:
first, note the $i$ index is used in the case where multiple rules are used with literal $a$ in the head.  
Next, $X_{a}^{(i)}$ is the vector of the lower bound of the annotations of each literal in the body ($x_j$ is the $j$th component of $X_{a}^{(i)}$).  
Finally, also note that $x_j'$ is unused by the annotation function for this particular rule (again, if we desire to use the upper bound, we can instead include the negation of literal $\ell_j$ in the set $\call'$ and use its upper bound).

\medskip
\noindent\textbf{Annotation/Activation Functions.}  In defining the annotation function $f_{a,\theta_a^i}^{(i)}$, it must have the capability to use parameters to ``turn off'' a given literal in the body.  We note that real-valued weights in traditional neural networks as well as the prior work described earlier cannot accomplish this task, hence the extensive use of fuzzy logic in other neural symbolic approaches.  
We seek to employ binarized neural networks~\cite{Courbariaux16} in which both weights and activations have values in the set $\{-1, 1\}$.  The work of~\cite{Courbariaux16} has led to successful use of a ``gradient descent'' style approach to discrete optimization for model training.  This avoids the well-known problem of vanishing gradients by substituting the partial derivative for a ``pseudo gradient'' of the activation function.  There are current implementations for binarized neural networks such as 
Larq\footnote{See \url{https://docs.larq.dev/larq/}} that have successfully employed this methodology.

For our purposes, the annotation function shall also be used as an activation function in the equivalent neural architecture.  
First, we present the meaning of the parameters.  Let $\theta_{a,j}^i$ be the $j$th component of $\theta_{a}^i$.  If this value is $1$, that means that literal $\ell_j$ should be considered in the body of the rule---likewise, if it is $-1$, then it should not.  
Hence, after training, we should be able to simply erase any $\ell_j$ where $\theta_{a,j}^i=-1$ to no effect.  
Second, we must consider the meaning of the activations.  For $f_{a,\theta_a^i}^{(i)}$ to return $1$, then for every $\theta_{a,j}^i=1$, $x_j$ must also equal $1$.  If there is any $j$ where $\theta_{a,j}^i=1$ and $x_j=-1$, then $f_{a,\theta_a^i}^{(i)}$ must return $-1$.  The following $\relu$ function accomplishes this requirement (note there are available pseudo gradients for binarized neural networks for $\relu$).

We must therefore consider the meaning of the lower bound of the annotation assigned by function $f_{a,\theta_a^i}^{(i)}$.  
In the below function, $Sign$ assigns values greater than $0$ to $1$, and $-1$ otherwise (as per~\cite{Courbariaux16}).
\begin{eqnarray}
\label{stdRelu}f_{a,\theta_a^i}^{(i)}(X_{a}^{(i)}) = Sign(\relu(1+\sum_j 0.5(1+\theta_{a,j}^i)(x_j-1)))
\end{eqnarray}

So, consider that $\theta_{a,j}^i =-1$ does not affect the sum with respect to literal $\ell_j$.  The same is true for any $x_j = 1$.  Therefore, if for all $j$ such that $\theta_{a,j}^i =1$ and $x_j=1$ then the function returns a $1$.  Now, if for a single $\ell_j$ we have $\theta_{a,j}^i =1$ and $x_j=-1$, then the function returns $-1$.

We point out that the number of rules used for a given ground atom can be considered as a hyper-parameter. Unlike in~\cite{deepMinIlp2018,nttIlp21}, where the parameters turn entire rules on or off, in this approach we are turning on or off atoms in the body.  This avoids the need to generate large numbers of rule templates.  However, it does rely on domain knowledge (e.g., in the form of a knowledge graph) to limit the number of literals considered in a given rule (in other words, we want to keep set $\call'$ small).

\section{Neural Architecture for Learning GAPs}
\label{sec:neuralEmbed}

In this section, we look at how a GAP can be embedded in a neural framework for use in training by gradient descent.  The concept is similar to that of the differentiable inductive logic programming (ILP) literature~\cite{nttIlp21,deepMinIlp2018} in that it involves an unrolling of the fixed point operator.  However, we use the rule structure and annotation/activation function of the previous section to avoid generating numerous rule templates.

We shall assume a RNN neural architecture consisting of $K$ recurrent cells.  In order to assure correctness, $K$ must be set to the maximum number of applications of the fixpoint operator (see Theorem~\ref{thm:appsOfT}).  We use $A$ to denote a vector of elements in $\calf$ of size $n$---intuitively, we want each position of $A$ to correspond to the annotation of a single literal in $\call$.  So, we shall assume a numbering of literals such that literal $a_j\in\cala$ would correspond with the $j$-th position of $A$.  The initial input to the first recurrent cell will be denoted $A_0$, where all positions are set to $\bot$.  For recurrent cell $t$, $A_{t-1}$ is the input and $A_{t}$ is the output.

For each rule $r_{a_j,i}$ there is an associated vector of body annotation lower bounds $X_{a_j}^{(i)}$ of size ${n_{a_j,i}}\leq n$; this is of a smaller size than $A_t$ as a body of a given rule may not include all atoms.  However, for a given annotation function $f_{a}^{(i)}$ in the head of a rule, when we say it evaluates an $n$-sized vector $A_t$, it is actually evaluating the positions in $A_t$ corresponding to the atoms for which there are positions in $X_{a_j}^{(i)}$.

In each recurrent cell $t$, vector $\tilde{A}_t^{(j)}$ is created and is of length $m_{a_j}$ (the number of rules, including facts, with atom $a_j$ in the head).  Position $i$ of $\tilde{A}_t^{(j)}$ is equal to $f_{a_j}^{(i)}(\theta_{a_j}^{(i)},A_{t-1})$.  For facts, this position will simply be the annotation in the head of the fact.

Finally, the result of recurrent cell $t$ is vector $A_t$ in which the $j$-th position of this vector corresponds with the supremum of the corresponding $\tilde{A}_t^{(j)}$.  Note that as this is a supremum of annotations in which only the lower bound changes, this is equivalent to a max pooling layer that only applies the pooling function to a subset of neurons.  The intuition is that each cell corresponds with one application of the $\T$ operator.  Figure~\ref{rnnCells} shows a depiction of this architecture.

\begin{figure*}[t]
    \begin{center}
        \includegraphics[width=.9\linewidth]{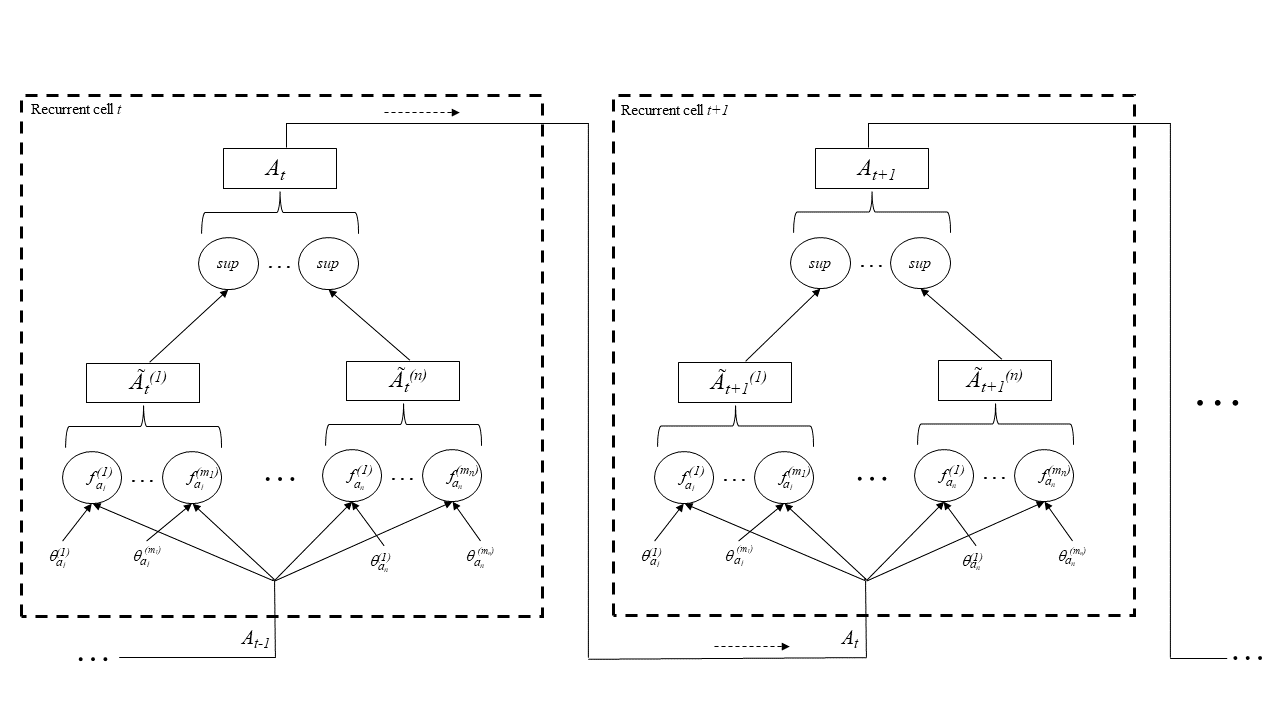}
    \end{center}
    \caption{\label{rnnCells}Recurrent cells described in this section.}
\end{figure*}

\section{Identifying Inconsistencies}
\label{sec:incon}
The use of the lower lattice for annotations does not allow for easy guarantees of consistency (i.e., see Proposition~\ref{prop:consist}).  While it is possible to guarantee consistency by determining rules with restrictive structures (e.g., disallowing negation in learned rules as in \cite{deepMinIlp2018}), this becomes difficult if rules are created as a result of the training process (e.g., setting parameters that configure the rules through gradient descent).

Other approaches to dealing with consistency in the settings integrating symbolic and machine learning paradigms have included notions of quantifying a level of inconsistency and then attempting to minimize such inconsistency.  This is commonly used in the (non-neural) probabilistic soft logic (PSL)~\cite{getoor2017}.  This concept has also appeared in neuro symbolic approaches as well~\cite{lnn2020}.

However, an advantage with the approach of this paper is that an inconsistency will be detected by some application of the fixpoint operator (Theorem~\ref{them:incon}).  This is why we changed the domain of an interpretation to be literals instead of atoms, as we can more easily identify a case where an atom and its negation are assigned annotations in such a way to cause an inconsistency.  We create a special atomic symbol $\incon$ for this purpose. The negation of this symbol is never used; however, if this symbol is annotated with $[1,1]$ (``true'') then it means we have an inconsistency.  In the logic program, we add a single rule of the form~\ref{rule:incon1} below, and for each ground atom $a$ we add an instance of rule~\ref{rule:incon2}:
\begin{eqnarray}
\label{rule:incon1}\incon:[-1,1] \leftarrow \\
\label{rule:incon2}\incon:[Sign(x+y),1] \leftarrow a:[x,x'] \wedge \neg a:[y,y']
\end{eqnarray}
Hence, for any application of the fixpoint operator, we know there is an inconsistency if $\incon$ is true and we need not carry out further applications of the operator.  
Our intuition is that this can be used during the training process to either establish a loss function that disallows any inconsistency, or to conduct a check at each iteration of gradient descent.  It is noteworthy that the fixpoint operator, which also is likely to bring further efficiencies to the training process, can be used in the forward pass as opposed to the equivalent neural architecture.  Further, the fixpoint operator will also provide information on {\em where} the inconsistency occurs, which can also be useful in guiding the training process.  
Note that some domains may accept inconsistency if it is localized and not associated with certain hard constraints (e.g., specified by rules with no parameters, which this framework also supports).

\section{How the Framework Meets the Criteria}
\label{sec:discusion}

We now briefly review the criteria introduced in Section~\ref{sec:cri} and discuss how the proposed approach can meet them:
\begin{itemize}
\item As it is an inherently neuro symbolic approach, it meets criterion 1.  

\item Due to the direct mapping to an equivalent neural architecture, which is similar to LNN~\cite{lnn2020}, it meets criterion 2 in that the results are directly explainable, symbolic, and amenable to further analysis.  
    
\item The framework can both allow parametrized rules (for induction) as well as directly represent logical statements in the equivalent neural architecture, 
which satisfies criterion 3---incorporation of prior knowledge---(as does \cite{ltn22,thomasNsr21,lnn2020}). 

\item We can provably show it can guarantee consistency, even when using negation, meeting criterion 4 (as does \cite{neurASP2020}).  

\item Through the use of parametrized rules, annotated logic, and binarized neural networks, it enables rule learning (criteria 5).

\item Unlike~\cite{deepMinIlp2018}, it avoids (non-scalable) template generation, instead building up antecedents by using the lattice structure to ``turn off'' certain atoms (meeting criterion 6).
\end{itemize}
Though, as we have argued, our proposed formalism meets all six criteria, there remain several hurdles to overcome, which we discuss in the following section.

\section{Challenges and Conclusion}
\label{sec:chall}
While the use of annotated logic in a neural symbolic framework is promising, there are several challenges to be addressed.  
First, it is expected that any neuro symbolic approach should support first order logic (i.e., the non-ground case).  
While there is nothing presented here that would not support such a capability, practical considerations around grounding can limit scalability in practice.  
We are exploring the use of knowledge graphs~\cite{tplp13} as an ontology to limit relationships among constants to reduce the grounding problem.  
A second, but equally important, concern is dealing with inconsistency.  It is challenging to specifically avoid inconsistency and ensure that the training process continues.  
Ultimately, this will require a determination of best practices around gradient descent and a vigilant measure of the boundaries of inconsistency.  
Third, and in the same vain, we also will need to explore other issues such as establishing best practices for connecting a neural network associated with a logic program with lower-level neural networks used for perception (e.g., CNN's).  Finally, implementation issues will clearly be very important in successfully developing tools based on this framework.

\section*{Acknowledgment}
P.S.\ is supported by internal funding from the ASU Fulton Schools of Engineering.
G.S.\ is supported by
Universidad Nacional del Sur (UNS) under grant PGI~24/ZN34
and Agencia Nacional de Promoci\'on Cien\-t\'{\i}\-fi\-ca y Tecnol\'ogica
under grant PICT-2018-0475 (PRH-2014-0007).



\end{document}